\def\cl@chapter{}
\Crefname{equation}{}{}\crefname{equation}{}{}
\Crefname{myprob}{Problem}{Problems}\crefname{myprob}{Problem}{Problems}
\theoremstyle{plain}
\newtheorem{mylemma}{Lemma}
\newtheorem{mythm}{Theorem}
\theoremstyle{definition}
\theoremstyle{remark}
\newtheorem{myremark}{Remark}
\setlist[description]{font=\normalfont}
\newcommand{\tabformathead}[1]{\textbf{\boldmath{#1}}}
\newenvironment{mtable}[1]{\begin{table}[t]\caption{#1}\begin{center}\begin{scriptsize}}{\end{scriptsize}\end{center}\end{table}}
\DeclareMathOperator{\signo}{sign}
\newcommand{\sign}[1]{\signo\prn{#1}}
\let\originalleft\left
\let\originalright\right
\renewcommand{\left}{\mathopen{}\mathclose\bgroup\originalleft}
\renewcommand{\right}{\aftergroup\egroup\originalright}
\newcommand{\prn}[1]{\left ( #1 \right )}
\newcommand{\brc}[1]{\left \{ #1 \right \}}
\newcommand{\brq}[1]{\left [ #1 \right ]}
\newcommand{\set}[1]{\brc{#1}}
\newcommand{\R}{\mathbb{R}}
\newcommand{\inp}[2]{\langle #1 | #2 \rangle}
\newcommand{\sepeq}{ , \quad}
\newcommand{\agropt}[1]{#1}
\newcommand{\sepopt}{~}
\newcommand{\sepcon}{\quad}
\newcommand{\minp}[2]{\min_{#1}\sepopt{\agropt{#2}}}
\newcommand{\minpc}[3]{\min_{#1}\sepopt{\agropt{#2}}\sepcon\text{s.t. }#3}
\newcommand{\minpcl}[3]{\left \{\begin{array}{l}\displaystyle \min_{#1}\sepopt{\agropt{#2}}\\\displaystyle\text{s.t. }#3\end{array} \right .}
\newcommand{\sreg}{\textsf{ZhouGC}}
\newcommand{\sbel}{\textsf{BelkGC}}
\newcommand{\srob}{\textsf{RobustGC}}
\newcommand{\srobf}{\textsf{PF-\srob}}
\newcommand{\includeg}[1]{\includegraphics{./#1.pdf}}
\newcommand{\mycaption}{}
\newenvironment{mfigure}[1]{\renewcommand{\mycaption}{#1}\begin{figure}[tbp]\begin{center}\centering}{\caption[PDFCaption]{\mycaption}\end{center}\end{figure}}
\newcommand{\plotline}[2]{\includegraphics{./Line#1#2.pdf}}
\newcommand{\plotpattern}[2]{\includegraphics{./Pattern#1#2.pdf}}
\newcommand{\showlegend}[1]{[\,\raisebox{0.5\height}{\plotline{}{#1}}\,]}
\newenvironment{malgorithm}[1]{\begin{algorithm}[t]\caption{#1}\begin{center}\begin{footnotesize}}{\end{footnotesize}\end{center}\end{algorithm}}
\newcommand{\algstop}{;}
\newcommand{\algreq}{$\cdot$~}
\colorlet{colorbest}{OliveGreen}
\colorlet{colorworst}{OliveGreen!10}
\newcommand{\formatgen}[4]{{\colorbox{colorbest!#1!colorworst}{\color{#2}\makebox[\widthof{$99.9\pm99.9$}][c]{\raisebox{0pt}[\height][0pt]{#3}}$\enspace^{(#4)}$}}}
\newcommand{\formata}[1]{\formatgen{100}{Black}{#1}{1}}
\newcommand{\formatb}[1]{\formatgen{66}{Black}{#1}{2}}
\newcommand{\formatc}[1]{\formatgen{33}{Black}{#1}{3}}
\newcommand{\formatd}[1]{\formatgen{0}{Black}{#1}{4}}
\newcommand{\emptycell}{---}
\newcommand{\emptyrow}{\emptycell & \emptycell & \emptycell & \emptycell & \emptycell \\}
\newcommand{\graph}{\mathcal{G}}
\newcommand{\vers}{\mathcal{V}}
\newcommand{\versl}{\vers_\mathrm{L}}
\newcommand{\edgs}{\mathcal{E}}
\newcommand{\lo}{\ell_1}
\newcommand{\lt}{\ell_2}
\newcommand{\tr}{\intercal}
\DeclareMathOperator{\diago}{diag}
\newcommand{\diag}[1]{\diago\prn{#1}}
\newcommand{\Ln}{L^\mathrm{N}}
\newcommand{\Lnt}{\tilde{L}^\mathrm{N}}
\newcommand{\Doh}{D^{-1/2}}
\newcommand{\evec}[1]{v_{#1}}
\newcommand{\eval}[1]{\lambda_{#1}}
\newcommand{\evmax}[1]{\eval{\max}\prn{#1}}
\newcommand{\evmin}[1]{\eval{\min}\prn{#1}}
\newcommand{\iden}{\mathbb{I}}
\newcommand{\norm}[1]{\left \| #1 \right \|_2}
\newcommand{\normp}[2]{\left \| #1 \right \|_{#2}}
\newcommand{\abs}[1]{\left | #1 \right |}
\newcommand{\opt}{^{\star}}
\newcommand{\maxp}[1]{\max\prn{#1}}
\newcommand{\loss}[1]{\operatorname{L}\prn{#1}}
\newcommand{\pz}[1]{P_0 #1}
\begin{document} 

\let\ref\cref\let\Ref\Cref

\title{Robust Classification of Graph-Based Data}

\author{Carlos~M.~Ala\'iz \and Micha\"el~Fanuel \and ~Johan~A.K.~Suykens}
\institute{C.M. Ala\'iz \at
            Universidad Aut\'onoma de Madrid - Departamento de Ingenier\'ia Inform\'atica \\
            Tom\'as y Valiente 11, 28049 Madrid - Spain \\
            \email{carlos.alaiz@uam.es}
           \and
           M. Fanuel and J.A.K. Suykens \at
            KU Leuven - Department of Electrical Engineering (ESAT--STADIUS) \\
            Kasteelpark Arenberg 10, B-3001 Leuven - Belgium
}

\maketitle

\begin{abstract}
A graph-based classification method is proposed for both semi-supervised learning in the case of Euclidean data and classification in the case of graph data.
Our manifold learning technique is based on a convex optimization problem involving a convex quadratic regularization term and a concave quadratic loss function with a trade-off parameter carefully chosen so that the objective function remains convex. As shown empirically, the advantage of considering a concave loss function is that the learning problem becomes more robust in the presence of noisy labels. Furthermore, the loss function considered here is then more similar to a classification loss while several other methods treat graph-based classification problems as regression problems.
\keywords{Classification \and Graph Data \and Semi-Supervised Learning}
\end{abstract}

\section{Introduction}
\label{SecIntroduction}

Nowadays there is an increasing interest in the study of graph-based data, either because the information is directly available as a network or a graph, or because the data points are assumed to be sampled on a low dimensional manifold whose structure is estimated by constructing a weighted graph with the data points as vertices. Moreover, fitting a function of the nodes of a graph, as a regression or a classification problem, can be a useful tool for example to cluster the nodes by using some partial knowledge about the partition and the structure of the graph itself.

In this paper, given some labelled data points and several other unlabelled ones, we consider the problem of predicting the label class of the latter.
Following the manifold learning framework, the data are supposed to be positioned on a manifold that is embedded in a high dimensional space, or to constitute a graph by themselves. In the first case, the usual assumption is that the classes are separated by low density regions, whereas in the second case is that the connectivity is weaker between classes than inside each of them~\citep{ChapelleBook}.
On the other side, the robustness of semi-supervised learning methods and their behaviour in the presence of noise, in this case just wrongly labelled data, has been recently discussed in~\cite{Gleich-2015-robustifying}, where a robustification method was introduced.
Notice that, out of the manifold learning framework, the literature regarding label noise is extensive, e.g. in~\cite{Natarajan2013} the loss functions of classification models are modified to deal with label noise, whereas in~\cite{Liu2016} a reweighting scheme is proposed. More recently, in~\cite{Vahdat2017} the label noise is modelled as part of a graphical model in a semi-supervised context.
There exists also a number of deep learning approaches for graph-based semi-supervised learning (see e.g.~\cite{Yang2016a}).

We propose here a different optimization problem, based on a concave error function, which is specially well-suited when the number of available labels is small and which can deal with flipped labels naturally.
The major contributions of our work are as follows:
\begin{enumerate}[(i)]
 \item We propose a manifold learning method phrased as an optimization problem which is robust to label noise. While many other graph-based methods involve a regression-like loss function, our loss function intuitively corresponds to a classification loss akin to the well-known hinge loss used in Support Vector Machines.
 \item We prove that, although the loss function is concave, the optimization problem remains convex provided that the positive trade-off parameter is smaller than the second least eigenvalue of the normalized combinatorial Laplacian of the graph.
 \item Computationally, the solution of the classification problem is simply given by solving a linear system, whose conditioning is described.
 \item In the case of Euclidean data, we present an out-of-sample extension of this method, which allows to extend the prediction to unseen data points.
 \item We present a heuristic method to automatically fix the unique hyper-parameter in order to get a parameter-free approach.
\end{enumerate}

Let us also emphasize that the method proposed in this paper can be naturally phrased in the framework of kernel methods, as a function estimation in a Reproducing Kernel Hilbert Space. Indeed, the corresponding kernel is then given by the Moore-Penrose pseudo-inverse of the normalized Laplacian. In this sense, this work can be seen as a continuation of~\cite{alaiz2016convex}.
    
The paper is structured as follows. \Ref{SecSemiClassification} introduces the context of the classification task and it reviews two state-of-the-art methods for solving it. In \ref{SecRobust} we introduce our proposed robust approach, which is numerically compared with the others in \ref{SecExperiments}. The paper ends with some conclusions in \ref{SecConclusions}.


\section{Classification of Graph-Based Data}
\label{SecSemiClassification}

\subsection{Preliminaries}

The datasets analysed in this paper constitute the nodes $\vers$ of a connected graph $\graph = \prn{\vers,\edgs}$, where the undirected edges $\edgs$ are given as a symmetric weight matrix $W$ with non-negative entries.
This graph can be obtained in different settings, e.g.:
\begin{itemize}
 \item Given a set of data points $\set{x_i}_{i = 1}^N$, with $x_i \in \R^d$ and a positive kernel $k\prn{x, y} \ge 0$, the graph weights can be defined as $w_{ij} = k\prn{x_i,x_j}$.
 \item Given a set of data points $\set{x_i}_{i = 1}^N$, with $x_i \in \R^d$, the weights are constructed as follows: $w_{ij} = 1$ if $j$ is among the $k$ nearest neighbours of $i$ for the $\lt$-norm, and $w_{ij} = 0$ otherwise. Then, the weight matrix $W$ is symmetrized as $\prn{W + W^\tr} / 2$.
 \item The dataset is already given as a weighted undirected graph.
\end{itemize}

Some nodes are labelled by $\pm 1$ and we denote by $\versl \subset \vers$ the set of labelled nodes. For simplicity, we identify $\versl$ with $\set{1,\dotsc, s}$ and $\vers$ with $\set{1,\dotsc, N}$, with $s < N$ the number of available labels. Any labelled node $i \in \versl$ has a class label $c_i = \pm 1$.
We denote by $y$ the label vector defined as follows
\begin{equation*}
 y_i = \begin{cases}
  c_i & \text{ if } i \in \versl , \\
  0   & \text{ if } i \in \vers \setminus \versl .
 \end{cases}
\end{equation*}

The methods discussed in this paper are formulated in the framework of manifold learning. Indeed, the classification of unlabelled data points relies on the definition of a Laplacian matrix, which can be seen as a discrete Laplace-Beltrami operator on a manifold~\citep{DiffusionMaps}.

Let $L = D - W$ be the matrix of the combinatorial Laplacian, where $D = \diag{d}$, $d$ is the degree vector $d = W 1$, and $1$ is the vector of ones, i.e., $d_i = \sum_{j = 1}^N w_{ij}$. We will write $i \sim j$ iff $w_{ij} \neq 0$.
The normalized Laplacian, defined by $\Ln = \Doh L \Doh = \iden - \Doh W \Doh$ (where $\iden \in \R^{N \times N}$ is the identity matrix), accounts for a non-trivial sampling distribution of the data points on the manifold. The normalized Laplacian has an orthonormal basis of eigenvectors $\set{\evec\ell}_{\ell = 0}^{N - 1}$, with $\evec{k}^\tr \evec\ell = \delta_{k \ell}$ (the Kronecker delta), associated to non-negative eigenvalues $0 = \eval0 \leq \eval1 \leq \dots \leq \eval{N-1} \leq 2$. Noticeably, the zero eigenvector of $\Ln$ is simply specified by the node degrees, i.e., we have $\evec{0,i} \propto \sqrt{d_i}$ for all $i = 1, \dotsc, N$. Notice that the Laplacian can be expressed in this basis according to the lemma below.
\begin{mylemma}
\label{Lem:spectral}
 The normalized Laplacian admits the following spectral decomposition, which also gives a resolution of the identity matrix $\iden \in \R^{N \times N}$:
 \begin{equation*}
  \Ln = \sum_{\ell = 1}^{N-1} \eval\ell \evec\ell \evec\ell^\tr \sepeq \iden = \sum_{\ell = 0}^{N-1} \evec\ell \evec\ell^\tr .
 \end{equation*}
\end{mylemma}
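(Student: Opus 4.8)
The plan is to deduce both identities from the spectral theorem for real symmetric matrices together with the single observation that the smallest eigenvalue of $\Ln$ vanishes. First I would record that $\Ln$ is symmetric: $W$ is symmetric by hypothesis, hence so is $L = D - W$, and conjugating by the diagonal (thus symmetric) matrix $\Doh$ preserves symmetry, so $\Ln = \Doh L \Doh = \prn{\Doh L \Doh}^\tr$. The spectral theorem then provides an orthonormal basis of eigenvectors $\set{\evec\ell}_{\ell = 0}^{N-1}$ of $\R^N$ with $\Ln \evec\ell = \eval\ell \evec\ell$ and $\evec k^\tr \evec\ell = \delta_{k\ell}$ --- precisely the facts already stated in the text preceding the lemma, which I would simply invoke.

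Next I would prove the resolution of the identity. Collecting the eigenvectors into the matrix $V = \brq{\evec0 \; \evec1 \; \cdots \; \evec{N-1}} \in \R^{N\times N}$, orthonormality reads $V^\tr V = \iden$; since $V$ is a square matrix this forces $V V^\tr = \iden$ as well, and expanding $V V^\tr$ as a sum of rank-one blocks gives $\iden = \sum_{\ell=0}^{N-1} \evec\ell \evec\ell^\tr$ (equivalently, every $x \in \R^N$ expands as $x = \sum_{\ell} \prn{\evec\ell^\tr x}\evec\ell$). Multiplying this identity on the left by $\Ln$ and using linearity together with $\Ln \evec\ell = \eval\ell \evec\ell$ then yields $\Ln = \sum_{\ell=0}^{N-1}\eval\ell \evec\ell \evec\ell^\tr$.

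Finally I would drop the $\ell = 0$ term. Since $\evec{0,i} \propto \sqrt{d_i}$, the vector $\Doh \evec0$ is proportional to the all-ones vector $1$, and $L 1 = D 1 - W 1 = d - d = 0$ because $W 1 = d$; hence $\Ln \evec0 = \Doh L \Doh \evec0 = 0$, so $\eval0 = 0$ and $\eval0 \evec0 \evec0^\tr = 0$. Removing this vanishing contribution leaves $\Ln = \sum_{\ell=1}^{N-1} \eval\ell \evec\ell \evec\ell^\tr$, as claimed. There is no serious obstacle here; the only point meriting care is that $\Ln$ may have repeated eigenvalues, so one genuinely needs the full spectral theorem rather than merely orthogonality of eigenvectors for distinct eigenvalues --- but since the existence of the orthonormal eigenbasis has already been asserted above, this is available for free. (One may additionally note that connectedness of $\graph$ makes the zero eigenvalue simple, i.e. $\eval1 > 0$, though this is not needed for the statement.)
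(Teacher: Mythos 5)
Your proof is correct. The paper itself does not supply an argument here: its ``proof'' of \ref{Lem:spectral} is a single citation to Chung's monograph on spectral graph theory, so there is no internal derivation to compare against step by step. What you have written is the standard self-contained justification that the citation stands in for, and each of your steps checks out: symmetry of $\Ln = \Doh L \Doh$ follows from symmetry of $W$; the spectral theorem yields the orthonormal eigenbasis $\set{\evec\ell}_{\ell=0}^{N-1}$; packing the eigenvectors into a square matrix $V$ with $V^\tr V = \iden$ and invoking that a square left inverse is a right inverse gives $\iden = \sum_{\ell=0}^{N-1}\evec\ell\evec\ell^\tr$; left-multiplying by $\Ln$ gives the full spectral sum; and the $\ell=0$ term drops because $\Doh\evec0 \propto 1$ and $L1 = d - d = 0$, so $\eval0 = 0$. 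Your closing remarks are also well judged: the possibility of repeated eigenvalues is exactly why one needs the full spectral theorem rather than orthogonality of eigenvectors for distinct eigenvalues (the paper sidesteps this by assuming one-dimensional eigenspaces ``for simplicity''), and connectedness guaranteeing $\eval1 > 0$ is indeed not needed for the identity itself, though it matters later for \ref{Thm:Main}. The trade-off is the usual one: the paper's citation keeps the exposition short, while your version makes the lemma verifiable without external references.
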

\begin{proof}
 See~\cite{ChungBook}.
\end{proof}
For simplicity, we assume here that each eigenvalue is associated to a one-dimensional eigenspace. The general case can be phrased in a straightforward manner.

Following~\cite{Belkin2004}, we introduce the smoothing functional associated to the normalized Laplacian:
\begin{equation}
\label{eq:Smoothness}
 S_\graph\prn{f} = \frac{1}{2} f^\tr \Ln f = \frac{1}{2} \sum_{i ,j | i \sim j} w_{ij} \prn{\frac{f_i}{\sqrt{d_i}} - \frac{f_j}{\sqrt{d_j}}}^2 ,
\end{equation}
where $f_i$ denotes the $i$-th component of $f$.

\begin{myremark}
 The smoothest vector according to the smoothness functional \ref{eq:Smoothness} is the eigenvector $\evec0$, which corresponds to a $0$ value, $S_\graph\prn{\evec0} = 0$.
\end{myremark}

The following sections contain a brief review of the state of the art for semi-supervised graph-based classification methods.

\subsection{Belkin--Niyogi Approach}
\label{SecBelkin}

In~\cite{Belkin2004}, a semi-supervised classification problem is phrased as the estimation of a (discrete) function written as a sum of the first $p$ smoothest functions, that is, the first $p$ eigenvectors of the combinatorial Laplacian.
The classification problem is defined by
\begin{equation}
\label[myprob]{eq:Belkin}
 \minp{a \in \R^p}{\sum_{i=1}^s \prn{c_i - \sum_{\ell = 0}^{p - 1} a_\ell \evec{\ell,i}}^2} ,
\end{equation}
where $a_0, \dotsc, a_{p - 1}$ are real coefficients.
The solution of \ref{eq:Belkin}, $a\opt$, is obtained by solving a linear system. The predicted vector is then
\begin{equation*}
 f\opt = \sum_{\ell = 1}^{p} a\opt_\ell \evec{\ell} .
\end{equation*}
Finally, the classification of an unlabelled node $i \in \vers \setminus \versl$ is given by $\sign{f\opt_i}$.
Indeed, \ref{eq:Belkin} is minimizing a sum of errors of a regression-like problem involving only the labelled data points. The information known about the position of the unlabelled data points is included in the eigenvectors $\evec\ell$ of the Laplacian (Fourier modes), which is the Laplacian of the full graph, including the unlabelled nodes. Only a small number $p$ of eigenvectors is used in order to approximate the label function. This number $p$ is a tuning parameter of the model.

We will denote this model as Belkin--Niyogi Graph Classification (\sbel{}).

\subsection{Zhou \emph{et al.} Approach}
\label{SecZhou}

In~\cite{Zhou}, the following regularized semi-supervised classification problem is proposed:
\begin{equation}
\label[myprob]{eq:Zhou}
 \minp{f \in \R^N}{\frac{1}{2} f^\tr \Ln f + \frac{\gamma}{2} \norm{f - y}^2} ,
\end{equation}
where $\gamma > 0$ is a regularization parameter which has to be selected.
We notice that the second term in the objective function of \ref{eq:Zhou}, involving the $\lt$-norm of the label vector, can be interpreted as the error term of a least-squares regression problem.
Intuitively, \ref{eq:Zhou} will have a solution $f\opt \in \R^N$ such that $f\opt_i \approx 0$ if $i \in \vers \setminus \versl$ (unlabelled nodes), that is, it will try to fit zeroes. Naturally, we will have $f\opt_i \approx c_i$ for all the labelled nodes $i \in \versl$.
Finally, the prediction of the unlabelled node class is given by calculating $\sign{f\opt_i}$ for $i \in \vers \setminus \versl$. The key ingredient is the regularization term (based on the Laplacian) which will make the solution smoother by increasing the bias.

Notice that the original algorithm solves \ref{eq:Zhou} once per each class, using as target the indicator vector of the nodes labelled as that class, and then classifying the unlabelled nodes according to the maximum prediction between all the classes. Although both formulations (using two binary target vectors and predicting with the maximum, or using a single target vector with $\pm 1$ and zero values and predicting with the sign) are slightly different, we will use \ref{eq:Zhou} since in this work we consider only binary problems.
We will denote this model as Zhou \emph{et al.} Graph Classification (\sreg{}).

In the recent work~\cite{Gleich-2015-robustifying}, it is emphasized that this method is implicitly robust in the presence of graph noise, since the prediction decays towards zero preventing the errors in far regions of the network from propagating to other areas. Moreover, a modification of this algorithm is proposed to add an additional $\lo$ penalization, so that the prediction decays faster according to an additional regularization parameter. However, the resultant method is still qualitatively similar to \sreg{} since the loss term is still the one of a regression problem, with the additional disadvantage of having an extra tuning parameter.

\subsection{Related Methods}

There are other semi-supervised learning methods that impose the label values as constraints~\citep{Zhu03semi-supervisedlearning,Joachims}. The main drawback is that, as discussed in~\cite{Gleich-2015-robustifying}, the rigid way of including the labelled information makes them more sensible to noise, specially in the case of mislabelled nodes.

On the other side, there are techniques with completely different approaches as Laplacian SVM~\citep{LapSVM}, a manifold learning model for semi-supervised learning based on an ordinary Support Vector Machine (SVM) classifier supplemented with an additional manifold regularization term. This method was originally designed for Euclidian data, hence its scope is different from the previous models. The straightforward approach to apply this method to graph data, is by embedding the graph, what in principle requires the computation of the inverse of a dense Gram matrix entering in the definition of an SVM problem. Hence, the training involves both a matrix inversion of the size of the labelled and unlabelled training data set and a quadratic problem of the same size. In order to reduce the computational cost, a training procedure in the primal was proposed in~\cite{LapSVMPrimal} where the use of a preconditioned conjugate gradient algorithm with an early stopping criterion is suggested. However, these methods still require the choice of two regularization parameters besides the kernel bandwidth. This selection requires a cross-validation procedure which is especially difficult if the number of known labels is small.

\section{Robust Method}
\label{SecRobust}

The two methods presented in \ref{SecBelkin,SecZhou} can be interpreted as regression problems, which intuitively estimate a smooth function $f\opt$ such that its value is approximately the class label, i.e., $f\opt_i \approx c_i$ for all the labelled nodes $i\in \versl$.
We will propose in this section a new method based on a concave loss function and a convex regularization term, which is best suited for classification tasks. Moreover, with the proper constraints, the resulting problem is convex and can be solved using a dual formulation.

We keep as a main ingredient the first term of \ref{eq:Zhou}, $\frac{1}{2} f^\tr \Ln f$, which is a well-known regularization term requiring a maximal smoothness of the solution on the (sampled) manifold.
However, if the smooth solution is $f\opt$, we emphasize that we have to favour  $\sign{f\opt_i} = c_i$ instead of imposing $f\opt_i \approx c_i$ for all $i \in \versl$. Hence, for $\gamma > 0$, we propose the minimization problem
\begin{equation}
\label[myprob]{eq:Robust1}
 \minpcl{f \in \R^N}{\frac{1}{2} f^\tr \Ln f - \frac{\gamma}{2} \sum_{i=1}^{N}{\prn{y_i + f_i}^2}}{f^\tr \evec0 = 0 ,}
\end{equation}
where $\gamma$ has to be bounded from above as stated in \ref{Thm:Main}.
The constraint means that we do not want the solution to have a component directed along the vector $\evec0$, since its components all have the same sign (an additional justification is given in \ref{Rem:RKHS}).
We will denote our model as Robust Graph Classification (\srob{}).

Notice that \ref{eq:Robust1}, corresponding to \srob{}, can be written as \ref{eq:Zhou}, corresponding to \sreg{}, by doing the following changes: $\gamma \to -\gamma$, $y \to -y$, and by supplementing the problem with the constraint $f^T v_0 = 0$.
Both problems can be compared by analysing the error term in both formulations.
In \sreg{} this term simply corresponds to the Squared Error (SE), namely $\prn{f_i - y_i}^2$.
In \srob{}, a Concave Error (CE) is used instead, $- \prn{f_i + y_i}^2$.
As illustrated in \ref{FigLossFuncions}, this means that \sreg{} tries to fit the target, both if it is a known label $\pm 1$, or if it is zero. On the other side, \srob{} tries to have predictions far from $0$ (somehow minimizing the entropy of the prediction), biased towards the direction marked by the label for labelled points. Nevertheless, as shown in \ref{FigLossFuncionsP}, the model is also able to minimize the CE in the opposite direction to the one indicated by the label, what provides robustness with respect to label noise. Finally, if the label is unknown, the CE only favours large predictions in absolute value.
As an additional remark, let us stress that the interplay of the Laplacian-based regularization and the error term, which are both quadratic functions, is yet of fundamental importance. As a matter of fact, in the absence of the regularization term, the minimization of the unbounded error term is meaningless.

{
 \renewcommand{\showlegend}[1]{[\,\raisebox{1.5pt}{\plotline{LossFunctions}{#1}}\,]}
 \begin{mfigure}{\label{FigLossFuncions} Comparison of the Squared Error (SE) and the proposed Concave Error (CE), both for a labelled node with $c_i = 1$ (the case $c_i = -1$ is just a reflection of this one) and for an unlabelled point. \\ Legend: \showlegend{0}~SE; \showlegend{1}~CE.}
  \subfloat[\label{FigLossFuncionsP} Positive label.]{\includeg{LossFunctionsP}}\quad%
  \subfloat[\label{FigLossFuncionsU} Unknown label.]{\includeg{LossFunctionsU}}
 \end{mfigure}
}

\srob{} can be further studied by splitting the error term to get the following equivalent problem:
\begin{equation*}
\label[myprob]{eq:Robust2}
 \minpcl{f \in \R^N}{\frac{1}{2} f^\tr \Ln f + \gamma \sum_{i=1}^{N}{\prn{-y_i f_i}} + \gamma \sum_{i=1}^{N}{\prn{- \frac{f_i^2}{2}}}}{f^\tr \evec0 = 0 ,}
\end{equation*}
where the two error terms have the following meaning:
\begin{itemize}
\item The first error term is a penalization term involving a sum of loss functions $\loss{f_i} = - y_i f_i$. This unbounded loss function term is reminiscent of the hinge loss in Support Vector Machines: $\maxp{0, 1 - y_i f_i}$. Indeed, for each labelled node $i \in \versl$, this term favours values of $f_i$ which have the sign of $y_i$. However, for each unlabelled node $i \in \vers \setminus\versl$, the corresponding term $\loss{f_i} = 0$ vanishes. This motivates the presence of the other error term.
\item The second error term is a penalization term forcing the value $f_i$ to take a non-zero value in order to minimize $- f_i^2 / 2$. In particular, if $i$ is unlabelled, this term favours $f_i$ to take a non-zero value which will be dictated by the neighbours of $i$ in the graph.
\end{itemize}

The connection between our method and kernel methods based on a function estimation problem in a Reproducing Kernel Hilbert Space (RKHS) is explained in the following remark.
\begin{myremark}
\label{Rem:RKHS}
 The additional condition $f^\tr \evec0  = 0$ in \ref{eq:Robust1} can also be justified as follows. The Hilbert space $H_K = \set{f \in \R^N \text{ s.t. } f^\tr \evec0 = 0}$ is an RKHS endowed with the inner product $\inp{f}{f'}_K = f^\tr \Ln f'$ and with the reproducing kernel given by the Moore--Penrose pseudo-inverse $K = \prn{\Ln}^\dagger$. 
 More explicitly, we can define $K_i = \prn{\Ln}^\dagger e_i \in \R^N$, where $e_i$ is the canonical basis element given by a vector of zeros with a $1$ at the $i$-th component. Furthermore, the kernel evaluated at any nodes $i$ and $j$ is given by $K\prn{i,j} = e_i^\tr \prn{\Ln}^\dagger e_j$. As a consequence, the reproducing property is merely~\citep{zhou2011iterated}
 \begin{equation*}
  \inp{K_i}{f}_K = \prn{\prn{\Ln}^\dagger e_i}^\tr \Ln f = f_i ,
 \end{equation*}
 for any $f\in H_K$.
 As a result, the first term of \ref{eq:Robust1} is equal to $\normp{f}{K}^2 / 2$ and the problem becomes a function estimation problem in an RKHS.
\end{myremark} 

Notice that the objective function involves the difference of two convex functions and, therefore, it is not always bounded from below. The following theorem states the values of the regularization parameter such that the objective is bounded from below on the feasible set and so that the optimization problem is convex.
\begin{mythm}
\label{Thm:Main}
 Let $\gamma > 0$ be a regularization parameter. The optimization problem
 \begin{equation*}
  \minpc{f \in \R^N}{\frac{1}{2} f^\tr \Ln f - \frac{\gamma}{2} \norm{f+y}^2}{f^\tr \evec0 = 0}
 \end{equation*}
 has a strongly convex objective function on the feasible space if and only if $\gamma < \eval1$, where $\eval1$ is the second smallest eigenvalue of $\Ln$. In that case, the unique solution is given by the vector:
 \begin{equation*}
  f\opt = \prn{\frac{\Ln}{\gamma} - \iden}^{-1} \pz{y} ,
 \end{equation*}
 with $\pz{} = \iden - \evec0 \evec0^\tr$.
\end{mythm}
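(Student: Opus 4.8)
The plan is to reduce the constrained problem to the $(N-1)$-dimensional feasible subspace $H = \set{f \in \R^N : f^\tr \evec0 = 0} = \spans{\evec1, \dotsc, \evec{N-1}}$, on which $\pz{} = \iden - \evec0 \evec0^\tr$ acts as the orthogonal projector. Since the objective $J\prn{f} = \tfrac12 f^\tr \Ln f - \tfrac\gamma2 \norm{f+y}^2$ is quadratic, its convexity on $H$ is controlled entirely by the quadratic form $f \mapsto f^\tr\prn{\Ln - \gamma\iden}f$ restricted to $H$, i.e.\ by the matrix $\pz{}\prn{\Ln - \gamma\iden}\pz{}$. First I would invoke \ref{Lem:spectral}: $\Ln$ leaves $H$ invariant and acts there in the orthonormal basis $\set{\evec\ell}_{\ell = 1}^{N-1}$ with eigenvalues $\eval1 \le \dots \le \eval{N-1}$, so the restricted Hessian has eigenvalues $\eval\ell - \gamma$, $\ell = 1, \dotsc, N-1$. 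Hence $J$ is strongly convex on $H$, with modulus $\eval1 - \gamma$, precisely when $\eval1 - \gamma > 0$. For the converse direction, when $\gamma \ge \eval1$ I would restrict $J$ to the line $\set{t\evec1 : t \in \R} \subset H$, where $J\prn{t\evec1} = \tfrac12\prn{\eval1 - \gamma}t^2 - \gamma t\,\evec1^\tr y - \tfrac\gamma2\norm{y}^2$ has non-positive leading coefficient; this shows $J$ is not strongly convex on $H$ (indeed unbounded below there unless $\gamma = \eval1$ and $\evec1^\tr y = 0$), which completes the equivalence.

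Assuming now $\gamma < \eval1$, strong convexity on the finite-dimensional subspace $H$ makes $J$ coercive and strictly convex there, so a unique minimizer $f\opt$ exists. To compute it I would write the first-order optimality condition via a Lagrange multiplier $\mu$ for the constraint: stationarity of $\tfrac12 f^\tr \Ln f - \tfrac\gamma2\norm{f+y}^2 + \mu\, f^\tr\evec0$ gives $\Ln f - \gamma\prn{f+y} + \mu\evec0 = 0$. Left-multiplying by $\evec0^\tr$ and using $\Ln\evec0 = 0$ together with $\evec0^\tr f = 0$ yields $\mu = \gamma\,\evec0^\tr y$, so the condition collapses to $\prn{\Ln - \gamma\iden}f = \gamma\prn{\iden - \evec0\evec0^\tr}y = \gamma\,\pz{y}$. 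Since $\gamma < \eval1$, the eigenvalues of $\Ln - \gamma\iden$ are $-\gamma$ (on $\evec0$) and $\eval\ell - \gamma > 0$ for $\ell \ge 1$, all nonzero, so $\Ln - \gamma\iden$ is invertible on $\R^N$; writing $\Ln - \gamma\iden = \gamma\prn{\Ln/\gamma - \iden}$ then gives $f\opt = \gamma\prn{\Ln - \gamma\iden}^{-1}\pz{y} = \prn{\Ln/\gamma - \iden}^{-1}\pz{y}$. Finally I would check feasibility: from $\prn{\Ln - \gamma\iden}\evec0 = -\gamma\evec0$ we get $\prn{\Ln - \gamma\iden}^{-1}\evec0 = -\tfrac1\gamma\evec0$, and since $\evec0^\tr\pz{y} = 0$ it follows by symmetry of the inverse that $\evec0^\tr f\opt = 0$, so $f\opt \in H$ as required.

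The step I expect to be the only real subtlety is the careful reduction to the subspace $H$: once one is precise about the fact that $\Ln$, $\iden$ and $\pz{}$ all act diagonally in the eigenbasis $\set{\evec\ell}$ and that the constraint merely deletes the $\evec0$-coordinate, both the ``if and only if'' (from the sign of $\eval1 - \gamma$) and the closed form for $f\opt$ become short computations. The remaining points — coercivity and uniqueness from strong convexity, invertibility of $\Ln - \gamma\iden$ on all of $\R^N$, and feasibility of the candidate solution — are routine verifications, and no genuine obstacle is anticipated beyond this bookkeeping.
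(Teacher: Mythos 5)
Your proposal is correct and follows essentially the same route as the paper: both arguments diagonalize the objective in the orthonormal eigenbasis of $\Ln$, observe that the constraint simply deletes the $\evec0$-coordinate, and read off strong convexity from the signs of $\eval\ell - \gamma$ for $\ell \ge 1$, giving the criterion $\gamma < \eval1$. The only (minor) divergence is in extracting the minimizer: the paper minimizes each decoupled scalar quadratic directly to obtain $\tilde{f}\opt_\ell = \tilde{y}_\ell / \prn{\eval\ell/\gamma - 1}$ and resums, whereas you impose the stationarity condition with a Lagrange multiplier on all of $\R^N$ and then check feasibility of the candidate; your multiplier value $\mu = \gamma\, \evec0^\tr y$, the resulting system $\prn{\Ln - \gamma \iden} f = \gamma\, \pz{y}$, and the feasibility verification are all correct and land on the same closed form.
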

\begin{proof}
 Using \ref{Lem:spectral}, any vector $f \in \evec0^\perp$, i.e., satisfying the constraint $f^\tr \evec0 = 0$, can be written as $f = \sum_{\ell = 1}^{N-1} \tilde{f}_\ell \evec\ell$, where $\tilde{f}_\ell = \evec\ell^\tr f \in \R$ is the projection of $f$ over $\evec\ell$. Furthermore, we also expand the label vector in the basis of eigenvectors $y = \sum_{\ell = 0}^{N-1} \tilde{y}_\ell \evec\ell$, with $\tilde{y}_\ell = \evec\ell^\tr y$. Then, the objective function is the finite sum
\begin{equation*}
 F\prn{\tilde{f}_1, \dotsc, \tilde{f}_{N-1}} = \sum_{\ell = 1}^{N-1}\prn{\frac{\eval\ell - \gamma}{2} \tilde{f}^2_\ell - \gamma \tilde{y}_\ell \tilde{f}_\ell} - \frac{\gamma}{2} \norm{y}^2 ,
\end{equation*}
 where we emphasize that the term $\ell = 0$ is missing. As a result, $F$ is clearly a strongly convex function of $\prn{\tilde{f}_1, \dotsc, \tilde{f}_{N-1}}$ if and only if $\gamma < \eval\ell$ for all $\ell = 1, \dotsc, N-1$, that is, iff $\gamma < \eval1$.
 Since the objective $F$ is quadratic, its minimum is merely given by $\prn{\tilde{f}\opt_1, \dotsc, \tilde{f}\opt_{N-1}}$, with
 \begin{equation}
 \label{eq:Proof}
  \tilde{f}\opt_\ell = \frac{\tilde{y}_\ell}{\frac{\eval\ell}{\gamma} - 1} , 
 \end{equation}
 for $\ell = 1, \dotsc, N-1$. Then, the solution of the minimization problem is given by
 \begin{align*}
  f\opt = \sum_{\ell = 1}^{N-1} \tilde{f}\opt_\ell \evec\ell &= \sum_{\ell = 1}^{N-1} \frac{\tilde{y}_\ell}{\frac{\eval\ell}{\gamma} - 1} \evec\ell \\
   &= \prn{\frac{\Ln}{\gamma} - \iden}^{-1} \prn{y - \evec0 \prn{\evec0^\tr y}} ,
 \end{align*}
 which is obtained by using the identity $y - \evec0 \prn{\evec0^\tr y} = \sum_{\ell = 1}^{N-1} \tilde{y}_\ell \evec\ell$. This completes the proof.
\end{proof}

By examining the form of the solution of \ref{eq:Robust1} given in \ref{eq:Proof} as a function of the regularization constant $0 < \gamma < \eval1$, we see that taking $\gamma$ close to the second eigenvalue $\eval1$ will give more weight to the second eigenvector, while the importance of the next eigenvectors decreases as $1 / \eval\ell$.
Regarding the selection of $\gamma$ in practice, as shown experimentally just fixing a value of $\gamma = \num{0.9} \eval1$ leads to a parameter-free version of \srob{} (denoted \srobf{}) that keeps a considerable accuracy.

The complete procedure to apply this robust approach is summarized in \ref{AlgRobust}, where $\gamma$ is set as a percentage $\eta$ of $\eval1$ to make it problem independent.
Notice that, apart from building the needed matrices and vectors, the algorithm only requires to compute the largest eigenvalue of a matrix and to solve a well-posed linear system.

\begin{malgorithm}{\label{AlgRobust} Algorithm of \srob{}.}
 \begin{algorithmic}[1]
  \REQUIRE {\quad} \\
   \algreq Graph $\graph$ given by the weight matrix $W$ \algstop \\
   \algreq Regularization parameter $0 < \eta < 1$ \algstop \\
  \ENSURE {\quad} \\
   \algreq Predicted labels $\hat{y}$ \algstop
  \STATE $d_{ii} \gets \sum_j {W_{ij}}$ \algstop
  \STATE $S \gets \Doh W \Doh$ \algstop
  \STATE $\Ln \gets \iden - S$ \algstop
  \STATE $(\evec0)_i \gets \sqrt{d_{ii}}$ \algstop
  \STATE $\evec0 \gets \evec0 / \|\evec0\|$ \algstop
  \STATE Compute $\eval1$, second smallest eigenvalue of $\Ln$, or, alternatively, largest eigenvalue of $S - \evec0 \evec0^\tr$ \algstop
  \STATE $\gamma \gets \eta \eval1$ \algstop
  \STATE $f \gets \prn{\Ln / \gamma -\iden}^{-1} \prn{y - \evec0 \prn{\evec0^\tr y}}$ \algstop
  \RETURN $\hat{y} \gets \sign{f}$ \algstop
 \end{algorithmic}
\end{malgorithm}

\subsection{Illustrative Example}

A comparison of \sreg{}, \sbel{} and \srob{} is shown in \ref{FigIllustrative}, where the three methods are applied over a very simple graph: a chain with strong links between the first ten nodes, strong links between the last ten nodes, and a weak link connecting the tenth and the eleventh nodes (with a weight ten times smaller). This structure clearly suggests to split the graph in two halves.

{
 \renewcommand{\showlegend}[1]{$\left [ \begin{minipage}{22pt}\centering \plotline{Illustrative}{#1} \\ \plotpattern{Illustrative}{#1} \end{minipage} \right ]$}
 \begin{mfigure}{\label{FigIllustrative} Comparison of the different methods over a chain with two clearly separable clusters, where the link between the two middle nodes is ten times smaller than the other links. \\ Legend: \showlegend{0}~\sreg{}; \showlegend{1}~\sbel{}; \showlegend{2}~\srob{}.}
  \subfloat[\label{FigIllustrativeA} Example with two correct labels.]{\includeg{DemoZhouVsRobust1}}\\
  \subfloat[\label{FigIllustrativeB} Example with four correct labels and a flipped one.]{\includeg{DemoZhouVsRobust2}}
 \end{mfigure}
}

In \ref{FigIllustrativeA} one node of each cluster receives a label, whereas in \ref{FigIllustrativeB} one node of the positive class and four of the negative are labelled, with a flipped label in the negative class.
The predicted values of $f\opt$ show that \sreg{} (with $\gamma = \num{1}$) is truly a regression model, fitting the known labels (even the flipped one) and pushing towards zero the unknown ones.
\sbel{} (with two eigenvectors, $p = \num{2}$) fits much better the unknown labels for nodes far from the labelled ones, although the flipped label push the prediction towards zero in the second example for the negative class.
Finally, \srob{} (with $\eta = \num{0.5}$) clearly splits the graph in two for the first example, where the prediction is almost a step function, and it is only slightly affected by the flipped label of the second example.
Of course, this experiment is only illustrative, since tuning the parameters of the different models could affect significantly the results.

\subsection{Conditioning of the Linear System}
\label{SecConditioning}

As shown in \ref{Thm:Main}, the \srob{} model is trained by solving the following linear system:
\begin{equation*}
 \prn{\frac{\Ln}{\gamma} - \iden} f\opt = \pz{y} .
\end{equation*}
It is therefore interesting to describe the condition number of this system in order to estimate the stability of its numerical solution. In particular, we will use the following lemma characterizing the maximum eigenvalue of $\Ln$.
\begin{mylemma}
If the weight matrix is positive semi-definite, $W \succeq 0$, then $\evmax{\Ln} \leq 1$. If $W$ is indefinite, then $\evmax{\Ln} \leq 2$.
\end{mylemma}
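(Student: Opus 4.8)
The plan is to pass from $\Ln$ to the symmetric matrix $S = \Doh W \Doh$, noting that $\Ln = \iden - S$ and hence $\evmax{\Ln} = 1 - \evmin{S}$; both bounds then follow from a single lower bound on the spectrum of $S$. Since the graph is connected, every degree $d_i$ is strictly positive, so $\Doh$ is a well-defined invertible symmetric matrix and congruence by $\Doh$ preserves positive semi-definiteness. This observation is essentially the only ingredient I will use.

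For the first assertion, if $W \succeq 0$ then at once $S = \Doh W \Doh \succeq 0$, so $\evmin{S} \geq 0$ and $\evmax{\Ln} = 1 - \evmin{S} \leq 1$. For the second assertion only the non-negativity of the entries of $W$ is available (not $W \succeq 0$), so instead I would work with the signless Laplacian $Q = D + W$ and show $Q \succeq 0$ via the quadratic-form identity $x^\tr Q x = \frac{1}{2} \sum_{i,j} w_{ij} \prn{x_i + x_j}^2 \geq 0$, which holds because $d_i = \sum_j w_{ij}$ and $w_{ij} \geq 0$. Conjugating by $\Doh$ gives $\iden + S = \Doh Q \Doh \succeq 0$, whence $\evmin{S} \geq -1$ and $\evmax{\Ln} = 1 - \evmin{S} \leq 2$.

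The argument is short and there is no genuine obstacle; the only care needed is in the algebra behind the signless-Laplacian identity, namely rewriting $\sum_i d_i x_i^2 = \frac{1}{2} \sum_{i,j} w_{ij}\prn{x_i^2 + x_j^2}$ using the symmetry of $W$ so that the diagonal and cross terms assemble into the perfect square $\prn{x_i + x_j}^2$. As a sanity check (not needed for the statement), equality $\evmax{\Ln} = 2$ would force $Q$ to be singular, which is exactly the situation of a bipartite connected component, consistent with the classical spectral bounds for the normalized Laplacian.
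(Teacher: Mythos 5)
Your proof is correct. The first half coincides with the paper's: both pass to $S = \Doh W \Doh$, use $\evmax{\Ln} = 1 - \evmin{S}$, and note that congruence by the invertible matrix $\Doh$ preserves positive semi-definiteness when $W \succeq 0$. For the general bound $\evmax{\Ln} \leq 2$, however, you take a genuinely different route. The paper observes that $S$ is conjugate to the row-stochastic matrix $\Sigma = D^{-1}W$, so the shared spectrum lies in $[-1,1]$ by the standard spectral-radius bound for stochastic matrices; you instead prove $\iden + S \succeq 0$ directly by exhibiting it as $\Doh Q \Doh$ for the signless Laplacian $Q = D + W$ and writing $x^\tr Q x = \tfrac{1}{2}\sum_{i,j} w_{ij}\prn{x_i + x_j}^2 \geq 0$ (the algebra here is right: $\sum_i d_i x_i^2 = \tfrac{1}{2}\sum_{i,j} w_{ij}\prn{x_i^2 + x_j^2}$ by symmetry of $W$, and the cross term completes the square). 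Your version is slightly more self-contained, since the paper's argument silently relies on the fact that a stochastic matrix has spectral radius at most one, and your quadratic-form identity additionally pins down the equality case $\evmax{\Ln} = 2$ as bipartiteness, which the stochastic-matrix argument does not give for free; the paper's version is shorter if one takes the stochastic-matrix fact as known. Both correctly use connectedness to ensure $d_i > 0$ so that $\Doh$ exists.
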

\begin{proof}
The argument is classic.
Let us write $\Ln = \iden - S$, with $S = \Doh W \Doh$. Clearly, $S$ is related by the conjugation to a stochastic matrix $\Sigma = D^{-1} W = \Doh S D^{1/2}$. Hence, $\Sigma$ and $S$ have the same spectrum $\set{\eval\ell}_{\ell = 0}^{N-1}$. Therefore, since $\Sigma$ is stochastic, it holds that $\abs{\eval\ell} \leq 1$ for all $\ell = 0, \dotsc, N-1$. Then, in general, $\evmax{\Ln} = 1 - \evmin{S} \le 2$, which proves the second part of the Lemma. Furthermore, if $W \succeq 0$, then $S \succeq 0$, which means that $\evmin{S} \geq 0$ and we have $\evmax{\Ln} = 1 - \evmin{S} \leq 1$, which shows the first part of the statement.
\end{proof}

Furthermore, in the feasible space (i.e., for all $f \in \R^N$ such that $f^\tr \evec0 = 0$), we have $\evmin{\Ln} = \eval1$.
Then, we can deduce the condition number of the system:
\begin{equation*}
 \kappa = \frac{\abs{\evmax{\Ln / \gamma - \iden}}}{\abs{\evmin{\Ln / \gamma - \iden}}} \leq \frac{c - \gamma}{\eval1 - \gamma} ,
\end{equation*}
where $c = 1$ if the weight matrix is positive semi-definite and $c = 2$ if the weight matrix is indefinite.

The upshot is that the problem is better conditioned a priori if the weight matrix is positive semi-definite. Furthermore, in order to have a reasonable condition number, $\gamma$ should not be too close to $\eval1$.

\subsection{Out-of-Sample Extension}

In the particular case of a graph obtained using a Mercer kernel over a set of data points $\set{x_i}_{i = 1}^N$, with $x_i \in \R^d$, an out-of-sample extension allows to make predictions over unseen points.

In order to pose an out-of-sample problem, let $f\opt = \prn{\Ln / \gamma - \iden}^{-1} \pz{y}$ be the solution of the \srob{} problem. Then, if we are given an additional point $x \in \R^d$, we want to obtain the value of the classifier $f_x$ such that $\sign{f_x}$ predicts the label of $x$.
In particular, recall that the normalized Laplacian $\Ln = \iden - S$ is built from the kernel matrix
\begin{equation*}
 S_{ij} = \frac{k\prn{x_i, x_j}}{\sqrt{d_i d_j}} , \text{ with } d_i = \sum_{j=1}^N k\prn{x_i, x_j} .
\end{equation*}
This kernel can be extended to the new point $x$ as follows:
\begin{equation*}
 S_{xj} = \frac{k\prn{x, x_j}}{\sqrt{d_x d_j}} , \text{ with } d_x = \sum_{j=1}^N k\prn{x, x_j} ,
\end{equation*}
and $S_{xx} = k\prn{x, x} / d_x$ (notice that in many of the most common kernels, such as the Gaussian kernel, $k\prn{x, x} = 1$).
We consider
\begin{equation*}
 \tilde{f} = \begin{pmatrix}
  f\opt \\
  f_x
 \end{pmatrix}
 \text{ and }
 \tilde{y} = \begin{pmatrix}
  y \\
  0
 \end{pmatrix} ,
\end{equation*}
with $\tilde{f}, \tilde{y} \in \R^{N+1}$ and $f, y \in \R^N$. The extension of the Laplacian is defined as follows:
\begin{equation*}
 \Lnt = \begin{pmatrix}
  \Ln & l \\
  l^\tr & \Lnt_{xx}
 \end{pmatrix},
\end{equation*}
with $l_i = - S_{xi}$ and $\Lnt_{xx} = 1 - k\prn{x, x} / d_x$. Notice that $\Lnt$ is not necessarily positive semi-definite.

In order to obtain $f_x$, we propose the minimization problem
\begin{equation*}
 \minpc{\tilde{f} \in \R^{N+1}}{\frac{1}{2} \tilde{f}^\tr \Lnt \tilde{f} -\frac{\gamma}{2} \sum_{i=1}^{N}{\prn{\tilde{y}_i + \tilde{f}_i}^2}}{\tilde{f} = \begin{pmatrix}
  f\opt \\
  f_x
 \end{pmatrix}} ,
\end{equation*}
which is equivalent to solving
\begin{equation*}
 \minp{f_x \in \R}{\frac{\Lnt_{xx} - \gamma}{2} f_x^2 + \prn{l^\tr f\opt} f_x} ,
\end{equation*}
where $l^\tr f\opt = - \sum_{i=1}^N S_{xi} f\opt_i$.
This quadratic problem has a solution provided that $\Lnt_{xx} - \gamma > 0$, that is, only if the degree of this new point $x$ is large enough:
\begin{equation*}
 d_x > \frac{k\prn{x, x}}{1 - \gamma} .
\end{equation*}
This means that $x$ has to be close enough from the initial set $\set{x_i}_{i = 1}^N$ in order to be able to extend the classifier given by $f\opt$ (notice that, in this case, $\gamma < \eval1 < 1$, and hence the inequality involving $d_x$ is well defined).
Under this assumption, the solution reads
\begin{equation}
 \label{eq:oosextension}
 f_x = \frac{1}{1 - \gamma - k\prn{x, x} d_x^{-1}} \sum_{i=1}^N S_{xi} f\opt_i ,
\end{equation}
namely, it is a Nystr\"om-like extension of the solution $f\opt$ with respect to the Mercer kernel $S\prn{x, y} = k\prn{x, y} / \sqrt{d_x d_y}$.

\subsubsection*{Example of the Out-of-Sample Extension}

\Ref{FigOOSExtension} includes an example of the out-of-sample extension over the \texttt{moons} dataset, with \num{50} patterns (\num{5} of them labelled) for each class. The model built is then extended over a $\num{100} \times \num{100}$ regular grid using \ref{eq:oosextension}. As the bandwidth of the kernel is extended, the prediction can be extended to a broader area, but the classification becomes less precise at the border between the two classes.

{
 \renewcommand{\showlegend}[1]{[\,\raisebox{1.5pt}{\plotline{OOSExtension}{#1}}\,]}
 \newcommand{\showlegendaux}[1]{[\plotpattern{OOSExtension}{#1}]}
 \begin{mfigure}{\label{FigOOSExtension} Out-of-sample extension over the \texttt{moons} dataset using different bandwidths. \\ Legend: \showlegend{0}/\showlegend{1}~unlabelled point of class $-1$/$+1$; \showlegend{2}/\showlegend{3}~labelled point of class $-1$/$+1$; \showlegendaux{-1}/\showlegendaux{1}~area predicted as class $-1$/$+1$; \showlegendaux{0}~area out of prediction range.}
  \subfloat[Bandwidth $\sigma = \num{0.15}$.]{\includeg{OOSExtensionA}}\quad%
  \subfloat[Bandwidth $\sigma = \num{0.3}$.]{\includeg{OOSExtensionB}}\\
  \subfloat[Bandwidth $\sigma = \num{0.6}$.]{\includeg{OOSExtensionC}}\quad%
  \subfloat[Bandwidth $\sigma = \num{1.2}$.]{\includeg{OOSExtensionD}}
 \end{mfigure}
}

\section{Experiments}
\label{SecExperiments}

In this section we will illustrate the robustness of the proposed method \srob{} with respect to labelling noise, we will show empirically how it can be successfully applied to the problem of classifying nodes over different graphs, and we will also include an example of its out-of-sample extension.

For the first two set of experiments, the following four models will be compared:
\begin{description}
 \item[\sreg{}] It corresponds to \ref{eq:Zhou}, where the parameter $\gamma$ is selected from a grid of \num{51} points in logarithmic scale in the interval $\brq{\num{e-5}, \num{e5}}$.
 \item[\sbel{}] It corresponds to \ref{eq:Belkin}. The number $p$ of eigenvectors used is chosen between \num{1} and \num{51}.
 \item[\srob{}] It corresponds to \ref{eq:Robust1}, where the parameter $\gamma$ is selected from a grid of \num{51} points in linear scale between $0$ and $\eval1$.
 \item[\srobf{}] It corresponds to \ref{eq:Robust1}, where $\gamma$ is fixed as $\gamma = \num{0.9} \eval1$, so it is a parameter-free method. As shown in \ref{FigTuning}, the stability of the prediction with respect to $\gamma$ suggests to use such a fixed value, where the solution is mainly dominated by the second eigenvector $\evec1$ but without ignoring the next eigenvectors. Moreover, a value of $\gamma$ closer to $\eval1$ could affect the conditioning of the linear system, as explained in \ref{SecConditioning}.
\end{description}

Regarding the selection of the tuning parameters, these models are divided in two groups:
\begin{itemize}
 \item For \sreg{}, \sbel{} and \srob{}, a perfect validation criterion is assumed, so that the best parameter is selected according to the test error. Although this approach prevents from estimating the true generalization error, it is applied to the three models so that the comparison between them should still be fair, and this way we avoid the crucial selection of the parameter, which can be particularly difficult for the small sizes of labelled set considered here. Obviously, any validation procedure will give results at best as good as these ones.
 \item \srobf{} does not require to set any tuning parameter, hence its results are more realistic than those of the previous group, and it is in disadvantage with respect to them. This means that, if this model outperforms the others in the experiments, it is expected to do it in a real context, where the parameters of the previous methods have to be set without using test information.
\end{itemize}

\subsection{Robustness of the Classification with respect to Label Noise}

The first set of experiments aims to test the robustness of the classification of the different models with respect to label noise. In particular, we propose to generate a Stochastic Block Model as follows: a very simple graph of \num{200} nodes with two clusters is generated with an intra-cluster connectivity of \num{70}\%, whereas the connectivity between clusters is either \num{30}\% (a well-separated problem) or \num{50}\% (a more difficult problem); an example of the resulting weight matrices is shown in \ref{FigConnectivity}.
For each of these two datasets, the performance of the models is compared for different numbers of labels and different levels of noise, which correspond to the percentage of flipped labels. Each configuration is repeated \num{50} times by varying the labelled nodes to average the accuracies.

 \begin{mfigure}{\label{FigConnectivity} Binary weight matrices for the Stochastic Block Model with low and high inter-cluster connectivity (the connections are marked in yellow).}
  \subfloat[Low inter-cluster connectivity.]{\includeg{DemoSSRobust-30-Matrix}}\quad%
  \subfloat[High inter-cluster connectivity.]{\includeg{DemoSSRobust-50-Matrix}}
 \end{mfigure}

The results are included in \ref{FigRobust}, where the solid lines represent the average accuracy, and the striped regions the areas between the minimum and maximum accuracies.
In the case of the low inter-cluster connectivity dataset (left column of \ref{FigRobust}), \srob{} is able to almost perfectly classify all the points independently of the noise level (hence the striped region only appears when the number of labels is small and the noise is maximum). Moreover, \srobf{} is almost as good as \srob{}, and only slightly worse when the noise is the highest and the number of labels is small. These two models outperform \sbel{}, and also \sreg{}, which is clearly the worse of the four approaches.
Regarding the high inter-cluster connectivity dataset (right column of \ref{FigRobust}), for this more difficult problem \srob{} still gets a perfect classification except when the noise level is very high, where the accuracy drops a little when the number of labels is small. \sbel{} is again worse than \srob{}, and the difference is more noticeable when the noise increases. On the other side, the heuristic \srobf{} is in this case worse than \sbel{} (the selection of $\gamma$ is clearly not optimal) but it still outperforms \sreg{}.

{
 \renewcommand{\showlegend}[1]{$\left [ \begin{minipage}{22pt}\centering \plotline{Robust}{#1} \\ \plotpattern{Robust}{#1} \end{minipage} \right ]$}
 \begin{mfigure}{\label{FigRobust} Robust comparison for the low inter-cluster connectivity graph (left column) and the high inter-cluster connectivity graph (right column). \\ Legend: \showlegend{0}~\sreg{}; \showlegend{1}~\sbel{}; \showlegend{2}~\srob{}; \showlegend{3}~\srobf{}.}
   \begin{tabular}{@{}l@{}r@{}}
    \multicolumn{1}{r}{\makebox[0.405\textwidth][c]{\textbf{Low Connectivity}}} & \multicolumn{1}{l}{\makebox[0.405\textwidth][c]{\textbf{High Connectivity}}} \\[1pt]
    {\includeg{DemoSSRobust2-30-00}} & \includeg{DemoSSRobust2-50-00}\\
    {\includeg{DemoSSRobust2-30-10}} & \includeg{DemoSSRobust2-50-10}\\
    {\includeg{DemoSSRobust2-30-20}} & \includeg{DemoSSRobust2-50-20}\\
    {\includeg{DemoSSRobust2-30-30}} & \includeg{DemoSSRobust2-50-30}\\
    {\includeg{DemoSSRobust2-30-40}} & \includeg{DemoSSRobust2-50-40}
   \end{tabular}
 \end{mfigure}
}

\subsection{Accuracy of the Classification}

The second set of experiments consists in predicting the label of the nodes over the following six supervised datasets:
\begin{description}
 \item[\texttt{digits49-s} \textnormal{and} \texttt{digits49-w}] The task is to distinguish between the handwritten digits $4$ and $9$ from the USPS dataset~\citep{hull1994database}; the suffix \texttt{-s} denotes that the weight matrix is binary and sparse corresponding to the symmetrized \num{20}-Nearest Neighbours graph, whereas the suffix \texttt{-w} corresponds to a non-sparse weight matrix built upon a Gaussian kernel with $\sigma = \num{1.25}$. The total number of nodes is \num{250} (\num{125} of each class).
 \item[\texttt{karate}] This dataset corresponds to a social network of \num{34} people of a karate club, with two communities of sizes \num{18} and \num{16}~\citep{zachary1977information}.
 \item[\texttt{polblogs}] A symmetrized network of hyperlinks between weblogs on US politics from \num{2005}~\citep{adamic2005political}; there are \num{1222} nodes, with two clusters of \num{636} and \num{586} elements.
 \item[\texttt{polbooks}] A network of books about US politics around \num{2004} presidential election, with \num{92} nodes and two classes of \num{49} and \num{43} elements.
 \item[\texttt{synth}] This  dataset is generated by a Stochastic Block Model composed by three clusters of $100$ points with a connectivity of $30\%$ inside each cluster and $5\%$ between clusters; the positive class is composed by one cluster and the negative by the other two.
\end{description}
For each dataset, \num{6} different training sizes (or number of labelled nodes) are considered, corresponding to \num{1}\%, \num{2}\%, \num{5}\%, \num{10}\%, \num{20}\% and \num{50}\% the total number of nodes, provided that this number is larger than two, since at least one sample of each class is randomly selected.
Moreover, each experiment is repeated \num{20} times by varying the labelled nodes in order to average the result and check if the differences between them are significant.
In order to compare the models we use the accuracy over the unlabelled samples.

The results are included in \ref{TabResults}, where the first column includes the percentage of labelled data and the corresponding number of labels, and the other four columns show the accuracy (mean and standard deviation) of the four models and the corresponding ranking (given also by the the colour; the darker, the better). The same rank value is repeated if two models are not significantly different\footnote{Using a Wilcoxon signed rank test for zero median, with a significance level of \num{5}\%.}.
We can see that the proposed \srob{} method outperforms both \sreg{} and \sbel{} at least for the smallest training sizes, and for all the sizes in the cases of \texttt{karate}, \texttt{polblogs} (the largest one) and \texttt{polbooks}.
In the case of \texttt{digits49-s} and \texttt{digits49-w}, \srob{} beats the other methods for the three first sizes, being then beaten by \sbel{} in the former and \sreg{} in the latter. Finally, for \texttt{synth} the robust \srob{} is the best model for the smallest training size, but it is then outperformed by \sbel{} until the largest training size, where both of them solve the problem perfectly. Notice that this dataset is fairly simple, and a spectral clustering approach over the graph (without any labels) could be near a correct partition; \sbel{} can benefit for this partition just regressing over the first eigenvectors to get a perfect classification with a very small number of labels.
Turning our attention to the parameter-free heuristic approach \srobf{}, it is comparable to the approach with perfect parameter selection \srob{} in \num{3} out of the \num{6} datasets. In \texttt{digits49-s}, \texttt{digits49-w} and \texttt{synth}, \srobf{} is comparable to \srob{} for the experiments with a small number of labels, although it works slightly worse when the number of labels is increased. Nevertheless, the results show that the proposed heuristic performs quite well in practice.

\begin{mtable}{\label{TabResults}Accuracy of the classification.}
\setlength{\tabcolsep}{0pt}
 \begin{tabular}{c@{\quad}l@{\quad}*4{c@{\,}}}
  \toprule
   \rotatebox[origin=c]{90}{\tabformathead{Data}} & \tabformathead{Labs.} & \tabformathead{\sreg{}} & \tabformathead{\sbel{}} & \tabformathead{\srob{}} & \tabformathead{\srobf{}} \\
  \midrule
\multirow{6}{*}{\rotatebox[origin=c]{90}{\texttt{digits49-s}}}
 & $\enspace 1\%$ (2) & \formatd{$\num{76.6}\pm\num{14.7}$} & \formatd{$\num{74.5}\pm\num{19.6}$} & \formata{$\num{79.1}\pm\num{16.4}$} & \formata{$\num{77.4}\pm\num{20.1}$}\\
 & $\enspace 2\%$ (5) & \formatd{$\num{80.1}\pm\num{9.4}$} & \formatb{$\num{81.6}\pm\num{11.5}$} & \formata{$\num{86.9}\pm\num{4.7}$} & \formatb{$\num{85.7}\pm\num{1.9}$}\\
 & $\enspace 5\%$ (12) & \formatd{$\num{85.8}\pm\num{4.0}$} & \formata{$\num{88.2}\pm\num{2.4}$} & \formata{$\num{88.7}\pm\num{2.7}$} & \formatd{$\num{85.0}\pm\num{1.6}$}\\
 & $10\%$ (25) & \formatb{$\num{89.3}\pm\num{2.6}$} & \formata{$\num{91.1}\pm\num{4.5}$} & \formatb{$\num{89.2}\pm\num{2.2}$} & \formatd{$\num{85.0}\pm\num{1.0}$}\\
 & $20\%$ (50) & \formatb{$\num{92.3}\pm\num{2.4}$} & \formata{$\num{94.5}\pm\num{2.6}$} & \formatc{$\num{89.7}\pm\num{1.9}$} & \formatd{$\num{84.8}\pm\num{1.4}$}\\
 & $50\%$ (125) & \formatb{$\num{94.8}\pm\num{2.0}$} & \formata{$\num{98.1}\pm\num{1.0}$} & \formatc{$\num{90.1}\pm\num{1.9}$} & \formatd{$\num{84.5}\pm\num{2.0}$}\\
\midrule
\multirow{6}{*}{\rotatebox[origin=c]{90}{\texttt{digits49-w}}}
 & $\enspace 1\%$ (2) & \formatd{$\num{70.1}\pm\num{13.4}$} & \formata{$\num{74.4}\pm\num{9.9}$} & \formata{$\num{75.5}\pm\num{13.3}$} & \formata{$\num{75.1}\pm\num{14.0}$}\\
 & $\enspace 2\%$ (5) & \formata{$\num{81.6}\pm\num{9.8}$} & \formatd{$\num{70.6}\pm\num{15.7}$} & \formata{$\num{82.7}\pm\num{7.4}$} & \formata{$\num{81.4}\pm\num{8.7}$}\\
 & $\enspace 5\%$ (12) & \formata{$\num{87.9}\pm\num{4.7}$} & \formata{$\num{85.4}\pm\num{9.4}$} & \formata{$\num{85.5}\pm\num{5.1}$} & \formatd{$\num{84.4}\pm\num{5.2}$}\\
 & $10\%$ (25) & \formata{$\num{93.9}\pm\num{2.1}$} & \formatd{$\num{89.3}\pm\num{5.9}$} & \formatd{$\num{90.1}\pm\num{4.7}$} & \formatd{$\num{89.1}\pm\num{4.0}$}\\
 & $20\%$ (50) & \formata{$\num{95.7}\pm\num{1.3}$} & \formatb{$\num{91.9}\pm\num{2.6}$} & \formatb{$\num{92.5}\pm\num{2.8}$} & \formatd{$\num{89.7}\pm\num{3.5}$}\\
 & $50\%$ (125) & \formata{$\num{96.9}\pm\num{1.3}$} & \formatb{$\num{95.4}\pm\num{1.7}$} & \formatb{$\num{94.5}\pm\num{2.6}$} & \formatd{$\num{89.6}\pm\num{2.9}$}\\
\midrule
\multirow{6}{*}{\rotatebox[origin=c]{90}{\texttt{karate}}}
 & \emptyrow
 & \emptyrow
 & $\enspace 5\%$ (2) & \formatd{$\num{90.3}\pm\num{12.2}$} & \formatc{$\num{95.5}\pm\num{7.3}$} & \formata{$\num{98.9}\pm\num{1.5}$} & \formata{$\num{98.9}\pm\num{1.5}$}\\
 & $10\%$ (3) & \formatd{$\num{89.4}\pm\num{8.2}$} & \formatd{$\num{92.7}\pm\num{6.5}$} & \formata{$\num{98.4}\pm\num{1.7}$} & \formata{$\num{98.2}\pm\num{1.6}$}\\
 & $20\%$ (6) & \formatd{$\num{85.5}\pm\num{8.6}$} & \formatb{$\num{96.2}\pm\num{5.2}$} & \formata{$\num{99.1}\pm\num{1.6}$} & \formatb{$\num{97.9}\pm\num{1.8}$}\\
 & $50\%$ (17) & \formatd{$\num{96.5}\pm\num{4.8}$} & \formata{$\num{99.4}\pm\num{1.8}$} & \formata{$\num{99.4}\pm\num{1.8}$} & \formata{$\num{98.2}\pm\num{2.8}$}\\
\midrule
\multirow{6}{*}{\rotatebox[origin=c]{90}{\texttt{polblogs}}}
 & $\enspace 1\%$ (12) & \formatd{$\num{92.3}\pm\num{3.1}$} & \formatd{$\num{92.0}\pm\num{4.3}$} & \formata{$\num{95.6}\pm\num{0.2}$} & \formata{$\num{95.5}\pm\num{0.2}$}\\
 & $\enspace 2\%$ (24) & \formatd{$\num{93.1}\pm\num{1.8}$} & \formatd{$\num{94.1}\pm\num{1.4}$} & \formata{$\num{95.6}\pm\num{0.2}$} & \formata{$\num{95.5}\pm\num{0.2}$}\\
 & $\enspace 5\%$ (61) & \formatd{$\num{94.5}\pm\num{0.9}$} & \formatd{$\num{94.7}\pm\num{0.6}$} & \formata{$\num{95.6}\pm\num{0.2}$} & \formata{$\num{95.5}\pm\num{0.2}$}\\
 & $10\%$ (122) & \formatd{$\num{94.6}\pm\num{0.7}$} & \formatc{$\num{95.1}\pm\num{0.6}$} & \formata{$\num{95.6}\pm\num{0.2}$} & \formata{$\num{95.6}\pm\num{0.2}$}\\
 & $20\%$ (244) & \formatd{$\num{94.8}\pm\num{0.5}$} & \formatc{$\num{95.2}\pm\num{0.5}$} & \formata{$\num{95.6}\pm\num{0.3}$} & \formata{$\num{95.6}\pm\num{0.3}$}\\
 & $50\%$ (611) & \formatd{$\num{95.3}\pm\num{0.6}$} & \formatc{$\num{95.6}\pm\num{0.7}$} & \formata{$\num{95.8}\pm\num{0.8}$} & \formata{$\num{95.7}\pm\num{0.7}$}\\
\midrule
\multirow{6}{*}{\rotatebox[origin=c]{90}{\texttt{polbooks}}}
 & \emptyrow
 & $\enspace 2\%$ (2) & \formata{$\num{97.0}\pm\num{2.0}$} & \formata{$\num{97.8}\pm\num{0.8}$} & \formata{$\num{97.8}\pm\num{0.0}$} & \formata{$\num{97.8}\pm\num{0.0}$}\\
 & $\enspace 5\%$ (4) & \formata{$\num{97.8}\pm\num{1.0}$} & \formata{$\num{97.4}\pm\num{1.0}$} & \formata{$\num{97.7}\pm\num{0.0}$} & \formata{$\num{97.7}\pm\num{0.0}$}\\
 & $10\%$ (9) & \formata{$\num{97.5}\pm\num{1.7}$} & \formata{$\num{97.5}\pm\num{0.7}$} & \formata{$\num{97.7}\pm\num{0.3}$} & \formata{$\num{97.7}\pm\num{0.3}$}\\
 & $20\%$ (18) & \formata{$\num{97.8}\pm\num{1.3}$} & \formata{$\num{97.4}\pm\num{0.6}$} & \formata{$\num{97.5}\pm\num{0.5}$} & \formata{$\num{97.5}\pm\num{0.5}$}\\
 & $50\%$ (46) & \formata{$\num{97.8}\pm\num{1.7}$} & \formata{$\num{97.4}\pm\num{1.7}$} & \formata{$\num{97.4}\pm\num{1.7}$} & \formata{$\num{97.4}\pm\num{1.7}$}\\
\midrule
\multirow{6}{*}{\rotatebox[origin=c]{90}{\texttt{synth}}}
 & $\enspace 1\%$ (3) & \formata{$\num{79.7}\pm\num{13.5}$} & \formata{$\num{86.4}\pm\num{11.8}$} & \formata{$\num{87.0}\pm\num{12.9}$} & \formata{$\num{85.5}\pm\num{12.6}$}\\
 & $\enspace 2\%$ (6) & \formatd{$\num{81.8}\pm\num{9.2}$} & \formata{$\num{100.0}\pm\num{0.0}$} & \formatb{$\num{91.3}\pm\num{11.3}$} & \formatb{$\num{90.8}\pm\num{11.7}$}\\
 & $\enspace 5\%$ (15) & \formatd{$\num{88.2}\pm\num{8.5}$} & \formata{$\num{100.0}\pm\num{0.0}$} & \formatb{$\num{94.3}\pm\num{8.9}$} & \formatd{$\num{92.1}\pm\num{10.2}$}\\
 & $10\%$ (30) & \formatd{$\num{93.4}\pm\num{5.3}$} & \formata{$\num{100.0}\pm\num{0.0}$} & \formatb{$\num{98.0}\pm\num{4.2}$} & \formatd{$\num{96.1}\pm\num{6.8}$}\\
 & $20\%$ (60) & \formatd{$\num{97.9}\pm\num{1.8}$} & \formata{$\num{100.0}\pm\num{0.0}$} & \formatb{$\num{99.6}\pm\num{0.6}$} & \formatd{$\num{98.7}\pm\num{2.7}$}\\
 & $50\%$ (150) & \formatd{$\num{99.6}\pm\num{0.5}$} & \formata{$\num{100.0}\pm\num{0.0}$} & \formata{$\num{100.0}\pm\num{0.1}$} & \formatd{$\num{99.5}\pm\num{0.5}$}\\
  \bottomrule
 \end{tabular}
\end{mtable}

\subsubsection*{Dependence on the Tuning Parameter}

As mentioned before, for the smallest training sets used here, some of them composed by only two labelled nodes, it is impossible to perform a validation procedure.
To analyse the dependence of \sreg{}, \sbel{} and \srob{} on their tuning parameters, \ref{FigTuning} shows the evolution of the average test accuracy, both for the smallest and largest training sizes. The proposed \srob{} has the most stable behaviour, although as expected it sometimes drops near the critical value $\gamma = \eval1$. Nevertheless, this should be the easiest model to tune. \sreg{} shows also a quite smooth dependence, but with a sigmoid shape, where the maximum tends to be located in a narrow region at the middle. Finally, \sbel{} (the model comparable to \srob{} in terms of accuracy) presents the sharpest plot with large changes in the first steps, and hence it is expected to be more difficult to tune.

{
 \renewcommand{\showlegend}[1]{[\,\raisebox{1.5pt}{\plotline{Tuning}{#1}}\,]}
 \begin{mfigure}{\label{FigTuning} Comparison of the accuracy with respect to the different tuning parameters, for the smallest and largest training sets, and for the six datasets. \\ Legend: \showlegend{0}~\sreg{}; \showlegend{1}~\sbel{}; \showlegend{2}~\srob{}.}
   \begin{tabular}{r@{\,}l}
    \multicolumn{1}{r}{\makebox[0.405\textwidth][c]{\textbf{\num{1}\% of Labels}}} & \multicolumn{1}{l}{\makebox[0.405\textwidth][c]{\textbf{\num{50}\% of Labels}}} \\[1pt]
    {\includeg{DemoSSDatasetGraphE-digits49-s-01}} & \includeg{DemoSSDatasetGraphE-digits49-s-50}\\
    {\includeg{DemoSSDatasetGraphE-digits49-w-01}} & \includeg{DemoSSDatasetGraphE-digits49-w-50}\\
    {\includeg{DemoSSDatasetGraphE-karate-01}} & \includeg{DemoSSDatasetGraphE-karate-50}\\
    {\includeg{DemoSSDatasetGraphE-polblogs-01}} & \includeg{DemoSSDatasetGraphE-polblogs-50}\\
    {\includeg{DemoSSDatasetGraphE-polbooks-01}} & \includeg{DemoSSDatasetGraphE-polbooks-50}\\
    {\includeg{DemoSSDatasetGraphE-synth-01}} & \includeg{DemoSSDatasetGraphE-synth-50}\\
  \includegraphics{./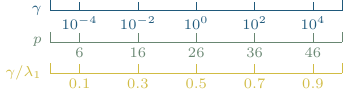} & \includegraphics{./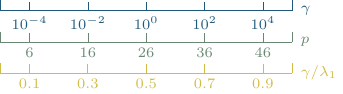}
   \end{tabular}
 \end{mfigure}
}

\subsection{Out-of-Sample Extension}

This experiment illustrates the out-of-sample extension, by comparing the accuracy of a model built using all the available graph and a model which is built with a smaller subgraph and then extended to the remaining nodes.

In particular, the dataset used is based on \texttt{digits49-w}, that is, a weighted graph representing the handwritten digits $4$ and $9$, but in this case the Gaussian kernel has a broader bandwidth of $\sigma = \num{5}$, so that the resulting model can be extended to all the patterns. Moreover, the total number of nodes is increased to \num{1000} (\num{500} of each class).
The number of labelled nodes is fixed to \num{10} (\num{5} of each class), whereas the size of the subgraph used to build the \srobf{} model is varied from \num{20} (\num{10} labelled and \num{10} unlabelled nodes) to \num{1000} (all the graph, \num{10} labelled and \num{990} unlabelled nodes). Once the \srobf{} model is built, the prediction is extended to the unlabelled nodes (both those used to build the model and those out of the initial subgraph, thanks to the out-of-sample extension), and the accuracy is measured.
The experiment is repeated \num{20} times to average the results, where the patterns are shuffled so that both the labelled nodes and the subgraph change.

The results are depicted in \ref{FigOOSEvo}, where the accuracy of the \srobf{} is shown as a function of the size of the subgraph used to built the model. As a baseline, an \srobf{} model using all the graph is also plotted. The solid lines represent the average accuracy, and the striped regions the areas between the minimum and maximum.
We can see that with a relatively small subgraph (of around \num{300} nodes) the subgraph model is comparable to the complete model (indeed, there is no statistically significant difference\footnote{Using a Wilcoxon signed rank test for zero median, with a significance level of \num{5}\%.} from iteration \num{179} on), so we can conclude that, in this particular case, the out-of-sample extension is working quite well.

{
 \renewcommand{\showlegend}[1]{$\left [ \begin{minipage}{22pt}\centering \plotline{OOSEvo}{#1} \\ \plotpattern{OOSEvo}{#1} \end{minipage} \right ]$}
 \begin{mfigure}{\label{FigOOSEvo} Comparison of the accuracy with respect to the size of the subgraph of an out-of-sample extended model and a model built using the complete graph. \\ Legend: \showlegend{0}~\srobf{} model (complete); \showlegend{1}~\srobf{} model (out-of-sample).}
  \includeg{DemoOOS03}
 \end{mfigure}
}


\section{Conclusions}
\label{SecConclusions}

Starting from basic spectral graph theory, a novel classification method applicable to both semi-supervised classification and graph data classification has been derived in the framework of manifold learning, namely Robust Graph Classification (\srob{}). The method has a clear interpretation in terms of loss functions and regularization.
Noticeably, even though the loss function is concave, we have stated the conditions so that the optimization problem is convex. A simple algorithm to solve this problem has been proposed, which only requires to solve a linear system. The results of the method on artificial and real data show that \srob{} is indeed more robust to the presence of wrongly labelled data points, and it is also particularly well-suited when the number of available labels is small. Moreover, an out-of-sample extension for this model is proposed, which allows to extend the initial model to points out of the graph.

As further work, we intend to study with more detail the possibilities of the concave loss functions in supervised problems, bounding the solutions using either regularization terms or other alternative mechanisms.
Regarding the selection of $\gamma$, according to our results the predictions of \srob{} are quite stable with respect to changes in $\gamma$ in an interval containing the best parameter value. Hence, it seems that a stability criterion could be useful to tune $\gamma$.
On the other side, \srob{} could be applied to large-scale graph data, where the out-of-sample extension could play a crucial role in order to make the optimization problem affordable.
Moreover, the out-of-sample extension potentially allows the proposed \srob{} method to be used in completely supervised problems, and compared with other classification methods.

\begin{acknowledgements}
 The authors would like to thank the following organizations.
 \begin{itemize*}
  \item EU: The research leading to these results has received funding from the European Research Council under the European Union's Seventh Framework Programme (FP7/2007-2013) / ERC AdG A-DATADRIVE-B (290923). This paper reflects only the authors' views, the Union is not liable for any use that may be made of the contained information.
  \item Research Council KUL: GOA/10/09 MaNet, CoE PFV/10/002 (OPTEC), BIL12/11T; PhD/Postdoc grants.
  \item Flemish Government:
  \begin{itemize*}
   \item FWO: G.0377.12 (Structured systems), G.088114N (Tensor based data similarity); PhD/Postdoc grants.
   \item IWT: SBO POM (100031); PhD/Postdoc grants.
  \end{itemize*}
  \item iMinds Medical Information Technologies SBO 2014.
  \item Belgian Federal Science Policy Office: IUAP P7/19 (DYSCO, Dynamical systems, control and optimization, 2012-2017).
  \item Fundaci\'on BBVA: project FACIL--Ayudas Fundaci\'on BBVA a Equipos de Investigaci\'on Cient\'ifica 2016.
  \item UAM--ADIC Chair for Data Science and Machine Learning.
  \item Concerted Research Action (ARC) programme supported by the Federation Wallonia-Brussels (contract ARC 14/19-060 on Mining and Optimization of Big Data Models).
 \end{itemize*}
\end{acknowledgements}



\end{document}